\documentclass[letterpaper,10pt,journal,twoside]{IEEEtran}
\usepackage{amsmath, amssymb, amsfonts, amsthm}
\usepackage{algorithmic}
\usepackage{algorithm}
\usepackage{array}
\usepackage[caption=false]{subfig}
\usepackage{textcomp}
\usepackage{stfloats}
\usepackage{url}
\usepackage{verbatim}
\usepackage{graphicx}
\usepackage{eso-pic}
\usepackage{cite}
\usepackage{capt-of}

\usepackage{booktabs}
\usepackage{multirow}
\usepackage{multicol}
\usepackage[table]{xcolor}
\usepackage{pifont}
\usepackage{bm}
\usepackage{makecell} 
\usepackage{hyperref}
\hypersetup{hypertex=true,
colorlinks=true,
linkcolor=black,
anchorcolor=black,
citecolor=black,
urlcolor=black}

\newcommand{\Frame}[1]{\underrightarrow{\mathcal{F}}_{#1}}
\newcommand{\xmark}{\ding{55}}
\newcommand{\cmark}{\ding{51}}

\newcommand{\bestcell}[1]{\textbf{#1}}
\newcommand{\secondcell}[1]{\underline{#1}}
\newcommand{\pmsep}{\hspace{0.22em}\(\pm\)\hspace{0.22em}}
\newcommand{\bestleft}[1]{\textbf{#1}}
\newcommand{\bestright}[1]{\textbf{#1}}
\newcommand{\secondleft}[1]{\underline{#1}}
\newcommand{\secondright}[1]{\underline{#1}}

\newtheorem{proposition}{Proposition}

\theoremstyle{remark}
\newtheorem{remark}{Remark}

\begin{document}

\title{A$^2$I-Calib++: A Kinematics-Driven Anti-noise Active Calibration Framework for Whole-Body Multi-IMU Systems on Legged Robots}

\author{Fangyu Jiang\IEEEauthorrefmark{2}, Chaoran Xiong\IEEEauthorrefmark{2},~\IEEEmembership{Graduate Student Member,~IEEE}, Kehui Ma,\\
Zhen Sun, Zeyu Zhang, and Ling Pei$^{\ast}$,~\IEEEmembership{Senior Member,~IEEE}\thanks{\fontsize{8}{8.25}\selectfont Received: July 30, 2026; Conditionally accepted: October 8, 2026. This paper was recommended for publication by Editor Lucia Pallottino upon evaluation of the Associate Editor and Reviewers' comments. This work was supported by the National Natural Science Foundation of China (Grant 62673319) and Shanghai QiYuan Innovation Foundation. This work was also partially supported by ``Research and Development of Robust Localization and Guidance Technologies for Complex Industrial Environments'' funded by Openmind Smart Robotics Technology Co., Ltd.}\thanks{\fontsize{8}{8.25}\selectfont $^{\dagger}$The first two authors contributed equally. $^{\ast}$Corresponding author: Ling Pei. The authors are with Shanghai Key Laboratory of Navigation and Location Based Services, School of Automation and Intelligent Sensing, Shanghai Jiao Tong University, Shanghai 200240, China (e-mail: \{jiangfangyu; sjtu4742986; khma0929; zhensun; zhang-zeyu; ling.pei\}@sjtu.edu.cn). Ling Pei is also the Chief Scientist of Jiangxi Zhilian Advanced Technology Research Co., Ltd.}\thanks{\fontsize{8}{8.25}\selectfont Author-prepared revised manuscript.}}
\IEEEaftertitletext{\vspace{-9.5mm}}

\markboth{IEEE Robotics and Automation Letters. Preprint Version. Conditionally Accepted October, 2026}{Jiang \MakeLowercase{\textit{et al.}}: A$^2$I-Calib++: Whole-Body Multi-IMU Active Calibration}

\hypersetup{pdftitle={A2I-Calib++: A Kinematics-Driven Anti-noise Active Calibration Framework for Whole-Body Multi-IMU Systems on Legged Robots},
pdfauthor={Fangyu Jiang, Chaoran Xiong, Kehui Ma, Zhen Sun, Zeyu Zhang, Ling Pei}}

\AddToShipoutPictureFG*{
\AtPageLowerLeft{
\put(0,10){\makebox[\paperwidth][c]{
\parbox[b]{\textwidth}{\fontsize{7.5}{8.5}\selectfont\noindent
\textcopyright\ 2026 IEEE. Personal use of this material is permitted. Permission from IEEE must be obtained for all other uses, in any current or future media, including reprinting/republishing this material for advertising or promotional purposes, creating new collective works, for resale or redistribution to servers or lists, or reuse of any copyrighted component of this work in other works.}
}}}
}

\maketitle
\pagestyle{headings}

\begin{abstract}
Recently, multi-node inertial measurement unit (multi-IMU) systems have shown promise for robust odometry in legged robots.
Multi-IMU fusion can improve legged robot odometry but requires accurate spatiotemporal calibration among the sensors.
Existing multi-IMU calibration methods, designed for aerial or handheld platforms, face challenges on legged robots because routine locomotion may provide insufficient excitation, while foot slips, impacts, and vibrations introduce measurement disturbances.
To address these challenges, we propose A$^2$I-Calib++, a kinematics-driven anti-noise active calibration framework for body-mounted IMUs covered by the robot kinematic model. It estimates temporal offsets, extrinsic rotations and translations, and inertial biases through temporal correlation search followed by analytic spatial and bias estimation, without artificial calibration targets or complex parameter tuning for the target IMU. A$^2$I-Calib++ comprises three core modules:
1) a Noise-Isotropy-Inspired Trajectory supported by a finite-horizon conditioning proposition that generates sufficient excitation to limit noise sensitivity; 2) a constraint-aware Adversarial Motion Priors controller that combines the anti-noise excitation with a rolling-contact reward for contact-aware execution; and 3) a contact-constrained virtual IMU that fuses pre-calibrated calf IMUs and joint kinematics under a rolling-contact model to construct the calibration reference.
Furthermore, evaluations on both simulation and real-world quadruped platforms demonstrate notably reduced noise sensitivity and simulation calibration errors compared to conventional methods, together with repeatable real-world calibration and improved proprioceptive odometry accuracy for legged robots.

\end{abstract}

\begin{IEEEkeywords}

Legged robots, calibration and identification, multi-node inertial measurement units, reinforcement learning, proprioceptive odometry.

\end{IEEEkeywords}

\section{Introduction}
\IEEEPARstart{R}{eliable} state estimation of legged robots\cite{wisth2023vilens,wu2024360,ou2024legkilo} constitutes a fundamental prerequisite for executing complex tasks in embodied navigation\cite{xiong2025en}. Recently, multi-IMU-based odometry has garnered considerable attention due to cost effectiveness and power efficiency\cite{li2025mio,yang2023mipo,xavier2023mipse}.

Distributing IMUs across a legged robot and fusing them with joint encoders
can improve base state estimation
\cite{yang2023mipo,xavier2023mipse,katayama2025learning}. However,
spatiotemporal misalignment among inertial nodes introduces inconsistent
kinematic constraints and systematic odometry drift
\cite{xiong2025tonvio}, making multi-IMU calibration essential.

Meeting this requirement on legged platforms remains difficult. Passive
targetless methods\cite{li2024targetless,chen2025ikalibr} relying on human intervention suffer from insufficient kinematic excitation, leading to enlarged sensitivity to IMU measurement noise
caused by foot slips, impacts, and vibrations.
Existing observability-aware methods
\cite{lv2022observability,wang2025targetfree} mainly assess collected data
or provide qualitative motion guidance, rather than directly designing
excitation to improve the numerical conditioning of the calibration
system. A$^2$I-Calib\cite{IROS} actively calibrates temporal offsets,
extrinsic rotations and gyroscope biases of calf-mounted IMUs, but
extending active calibration to body-mounted IMUs additionally requires
extrinsic-translation and accelerometer-bias estimation.

\begin{figure}[!t]
\centering
\includegraphics[width=1\linewidth]{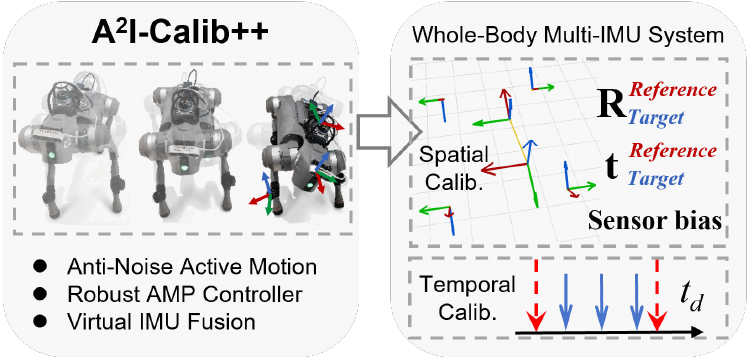}
\caption{A$^2$I-Calib++ performs kinematics-driven spatiotemporal calibration of body IMUs on legged robots using anti-noise calibration motion, AMP-based execution, and a contact-constrained VIMU reference, thereby improving whole-body multi-IMU calibration and downstream proprioceptive odometry.}
\vspace{-4mm}
\label{fig:teaser}
\vspace{-2mm}
\end{figure}

\begin{table}[!t]
\caption{Comparison of Calibration Capabilities of Different Methods}
\vspace{-2mm}
\label{tab:a2i_capability}
\centering
\renewcommand{\arraystretch}{1.05}
\setlength{\tabcolsep}{2.6pt}
\begin{tabular*}{\linewidth}{@{\extracolsep{\fill}}cccccccc@{}}
\toprule
\textbf{Method} & \textbf{Target} & \textbf{Mode} & $t_d$ & $\mathbf{R}$ & $\mathbf{p}$ & $\mathbf{b}_g$ & $\mathbf{b}_a$ \\
\midrule
iKalibr~\cite{chen2025ikalibr} & Body IMUs & Passive Calibration & \cmark & \cmark & \cmark & \cmark & \cmark \\
A$^2$I-Calib~\cite{IROS} & Calf IMUs & Active Calibration & \cmark & \cmark & \xmark & \cmark & \xmark \\
A$^2$I-Calib++ & Body IMUs & Active Calibration& \cmark & \cmark & \cmark & \cmark & \cmark \\
\bottomrule
\multicolumn{8}{@{}l@{}}{\makebox[0pt][l]{\parbox{0.95\linewidth}{\footnotesize \textsuperscript
{*} In iKalibr, IMU biases are jointly estimated only in configurations involving exteroceptive sensors\cite{chen2025ikalibr}.}}}
\end{tabular*}
\vspace{-2mm}
\end{table}

To address these challenges, we present A$^2$I-Calib++, a kinematics-driven anti-noise active calibration framework for whole-body multi-IMU systems on legged robots.
Our main contributions are as follows:
\begin{enumerate}
\item An active calibration framework that recovers the full set of spatiotemporal parameters with temporal correlation search and analytic spatial estimation (search and solve), requiring no additional artificial targets or complex parameter tuning.
\item A kinematics-driven anti-noise mechanism that combines an anti-noise reference excitation generated by the Noise Isotropy Proposition (NIP) with a contact-constrained virtual IMU (VIMU) reference formed by multi-leg fusion to reduce calibration noise sensitivity.
\item A constraint-aware Adversarial Motion Priors (AMP) controller that learns the reference motion's state-transition distribution and incorporates a rolling-contact task reward to generate contact-aware calibration motions for legged-robot execution.
\item Comprehensive validation across simulation and real-world quadruped platforms, demonstrating notable reductions in
noise sensitivity, high simulation calibration accuracy, repeatable real-world calibration, and lower proprioceptive odometry drift compared with a state-of-the-art (SOTA) calibration method.
\end{enumerate}

\section{Related Work}

Multi-IMU estimators for legged robots commonly use a central IMU for state propagation and kinematically coupled peripheral IMUs for corrective updates\cite{yang2023mipo,xavier2023mipse,katayama2025learning}. Their fusion therefore requires consistent temporal offsets, extrinsic transformations, and sensor biases. Otherwise, spatiotemporal misalignment introduces inconsistent kinematic constraints and systematic odometry drift.

Existing multi-IMU calibration methods can be grouped into exteroceptive-aided and pure-inertial approaches. Rehder et al.\cite{rehder2016extending} extended Kalibr with camera observations for multi-IMU extrinsic and intrinsic calibration. Light Detection and Ranging (LiDAR) measurements have similarly been used to assist multi-IMU calibration\cite{li2024targetless}, while Hartzer et al.\cite{hartzer2023online} proposed online calibration using visual-inertial odometry. Pure-inertial methods instead exploit trajectory correlation, but their accuracy remains sensitive to measurement noise and excitation quality\cite{qiu2021real,lee2022extrinsic}. Although active calibration can deliberately improve excitation, existing trajectory-design methods mainly address industrial manipulators rather than agile quadrupeds\cite{huang2008determination,yang2025industrial}.

Deep reinforcement learning (RL) has substantially expanded quadruped motor skills\cite{peng2022amp,arm2024pedipulate,cheng2024extreme}, yet its use for sensor calibration remains underexplored. A$^2$I-Calib\cite{IROS} employs an RL controller but is limited to calf-mounted IMUs and estimates only their temporal offsets, rotations and gyroscope biases, leaving body-mounted IMUs, translation and accelerometer biases unaddressed. A$^2$I-Calib++ therefore relies on A$^2$I-Calib as a preliminary calf-IMU calibration step to recover the spatiotemporal parameters of body-mounted IMUs. The resulting capability extension and comparison with iKalibr are illustrated in Tab.~\ref{tab:a2i_capability}.

\section{Problem Formulation}

\subsection{IMU Measurement Model and Kinematics}
\label{subsec:imu_model}

For a multi-IMU sensor suite, let $\Frame{c^i}$ and $\Frame{r}$ denote the frames of the $i$-th target IMU and the reference sensor, respectively. The target measurements follow
\begin{equation}
\tilde{\boldsymbol{\omega}}^{c^i} = \boldsymbol{\omega}^{c^i} + \mathbf{b}_g^{c^i} + \mathbf{n}_g^{c^i} ,\quad
\tilde{\mathbf{a}}^{c^i} = \mathbf{a}^{c^i} + \mathbf{b}_a^{c^i} + \mathbf{n}_a^{c^i},
\end{equation}
where $\boldsymbol{\omega}^{c^i}$ and $\mathbf a^{c^i}$ are the true angular velocity and specific force, $\mathbf b_g^{c^i}$ and $\mathbf b_a^{c^i}$ are biases, and $\mathbf n_g^{c^i}$ and $\mathbf n_a^{c^i}$ are Gaussian white-noise terms. Transmission and synchronization delays yield the temporal offset $t_d^i=t_c^i-t_r$. The rigid-body relationships are
\begin{equation}
\boldsymbol{\omega}^r = \mathbf{R}^r_{c^i} \boldsymbol{\omega}^{c^i} ,\quad
    \mathbf{v}^r = \mathbf{R}^r_{c^i} \mathbf{v}^{c^i}(t) - \lfloor \boldsymbol{\omega}^r \rfloor_\times \mathbf{t}^r_{r\to c^i},
\end{equation}
where $\mathbf R^r_{c^i}\in SO(3)$ maps vectors from $\Frame{c^i}$ to $\Frame{r}$, $\mathbf t^r_{r\to c^i}\in\mathbb R^3$ is the translation from the reference origin to the target origin expressed in $\Frame{r}$, and $\lfloor\cdot\rfloor_\times$ is the skew-symmetric operator. The calibration problem estimates
\begin{equation}
    \mathcal{X}^i = \{t_d^i, \mathbf{R}^r_{c^i}, \mathbf{b}^{c^i}_g, \mathbf{t}^r_{r\to c^i}, \mathbf{b}^{c^i}_a\},
\end{equation}
where $t_d^i$, $\mathbf b_g^{c^i}$, and $\mathbf b_a^{c^i}$ are assumed constant over the short observation window in this work.

\subsection{Conditioning Objective for Calibration Trajectory}
\label{subsec:calib_traj}

For a fixed, full-column-rank system $\mathbf H\mathbf x=\mathbf y$, where $\mathbf H\in\mathbb R^{m\times n}$, $m\geq n$, and $\mathbf x,\mathbf y\ne\mathbf0$, the spectral condition number and its observation-perturbation bound are \cite{golub2013matrix,sauer2011numerical}
\begin{equation}
\kappa_2(\mathbf H)
=\frac{\sigma_{\max}(\mathbf H)}{\sigma_{\min}(\mathbf H)},\;
\frac{\|\delta\mathbf x\|}{\|\mathbf x\|}
\leq
\kappa_2(\mathbf H)
\frac{\|\delta\mathbf y\|}{\|\mathbf y\|}.
\label{eq:condition_error_bound}
\end{equation}
\vspace{-0.mm}
where $\sigma_{\max}(\mathbf H)$ and $\sigma_{\min}(\mathbf H)$ are the largest and smallest singular values of $\mathbf H$, while $\delta\mathbf x$ and $\delta\mathbf y$ are the parameter and observation perturbations, respectively. $\kappa_2(\mathbf H)$ bounds the worst-case relative amplification from observation noise to parameter error. A large condition number permits strong noise amplification and leads to noise-sensitive estimates, whereas a small condition number tightens this bound. The best-conditioned case $\kappa_2(\mathbf H)=1$ minimizes the amplification.

Accordingly, let $\mathbf H(\mathcal T)$ denote the trajectory-dependent calibration matrix. We seek an anti-noise calibration trajectory $\mathcal T^*$ that minimizes the noise sensitivity of the calibration problem:
\vspace{-0.6mm}
\begin{equation}
\label{eq:optimal_traj}
    \mathcal{T}^* = \arg\min_{\mathcal{T}} \kappa_2\!\left(\mathbf H(\mathcal T)\right).
\end{equation}
\vspace{-1.0cm}

\section{Methodology}
\vspace{-0.6mm}

\begin{figure*}
\centering
\includegraphics[width=0.99\linewidth]{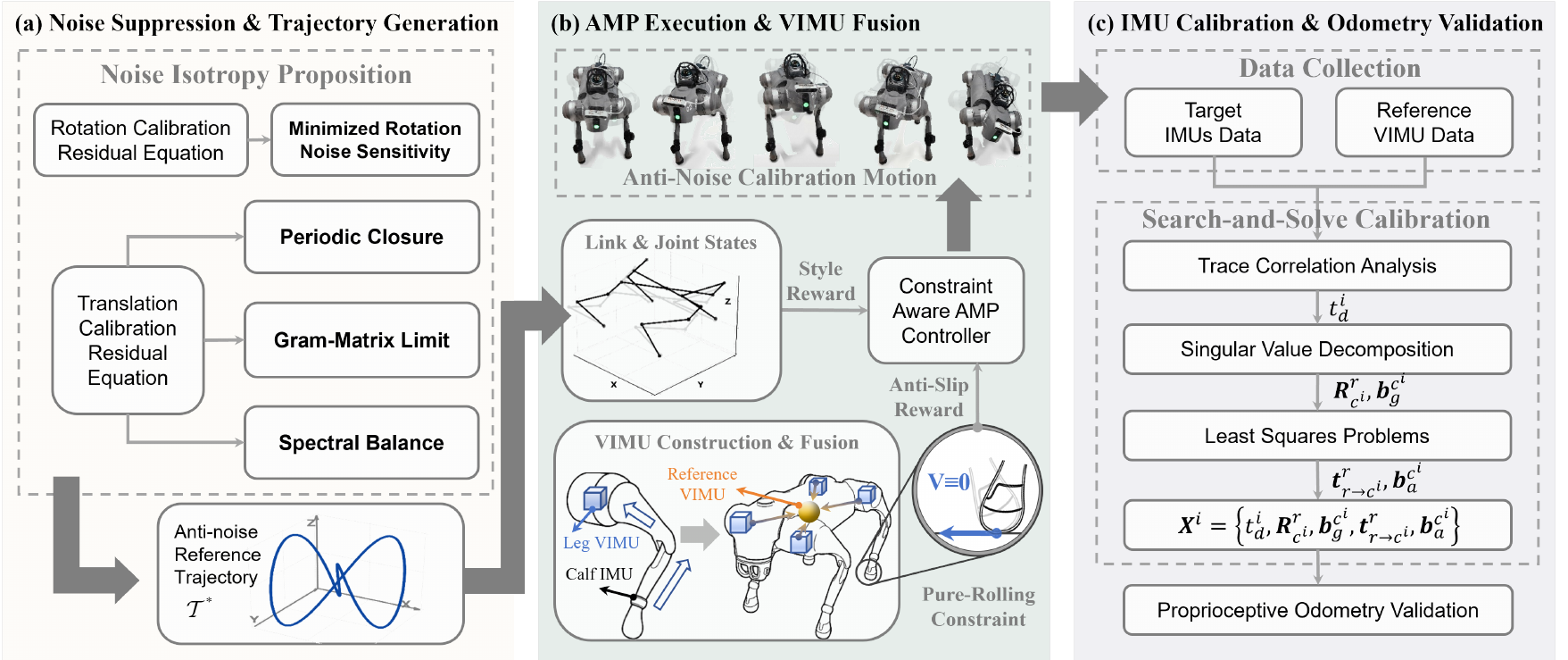}
\caption{System overview of A$^2$I-Calib++. (a) summarizes the Noise-Isotropy Proposition (NIP) and the resulting anti-noise reference excitation. (b) shows AMP-based contact-aware motion execution and multi-leg VIMU reference construction under a rolling-contact model. (c) presents search-and-solve parameter estimation and downstream proprioceptive-odometry evaluation.}
\label{fig:system_overview}
\vspace{-2mm}
\end{figure*}

\subsection{System Overview}
\label{subsec:system_overview}

The A$^2$I-Calib++ pipeline is shown in Fig. \ref{fig:system_overview}. First, the Noise-Isotropy Proposition (NIP) characterizes the finite-horizon conditioning of orthogonal harmonic excitation and guides the construction of an anti-noise calibration trajectory. A preliminary A$^2$I-Calib step estimates the calf-IMU temporal offsets and extrinsic rotations required for kinematic propagation. The constraint-aware AMP policy is trained with a style reward derived from the reference motion's state-transition distribution and a rolling-contact task reward to generate a contact-aware excitation. Under the rolling-contact model, measurements from the pre-calibrated calf IMUs and joint encoders are propagated to leg VIMUs and fused into a reference VIMU, reducing sensitivity to disturbances from any single leg. The target and reference sequences are then processed by the search-and-solve calibration procedure to estimate $\mathcal X$. Finally, the estimated parameters are used to correct the IMU measurements, and their downstream effect is evaluated through proprioceptive odometry against a LiDAR-IMU odometry (LIO) reference.

\vspace{-0.9mm}
\subsection{Search-and-Solve IMU Calibration}
\label{subsec:calib_solver}

For clarity, we omit the target index $i$ in this subsection. To recover the parameter set $\mathcal X$, we decouple the
spatiotemporal calibration into two sequential problems.

\subsubsection{Temporal and Rotational Calibration}
\label{subsubsec:t_R_bg}
The angular-velocity sample cross-covariance matrix is approximated by
\begin{equation}
\mathbf\Sigma_{cr}
=
\frac{1}{N-1}\sum_{n=1}^{N}
\left(\tilde{\boldsymbol\omega}_n^c-\overline{\boldsymbol\omega}^c\right)
\left(\tilde{\boldsymbol\omega}_n^r-\overline{\boldsymbol\omega}^r\right)^\top,
\end{equation}
where $\overline{\boldsymbol\omega}^c$ and $\overline{\boldsymbol\omega}^r$ are the empirical means of the target and reference measurements, respectively. For a candidate time delay $t_d$, let ${}^{t_d}\tilde{\boldsymbol\omega}_n^r$ denote the shifted reference measurement, and let ${}^{t_d}\mathbf\Sigma_{rr}$, ${}^{t_d}\mathbf\Sigma_{rc}$, and ${}^{t_d}\mathbf\Sigma_{cr}$ denote the corresponding covariance matrices. The temporal offset maximizes the trace correlation coefficient \cite{qiu2021real}:
\begin{equation}
\label{eq:td}
t_d^\star
=
\arg\max_{t_d}
\sqrt{\frac{1}{3}\operatorname{Tr}\!\left(
\left({}^{t_d}\mathbf\Sigma_{rr}\right)^{-1}
{}^{t_d}\mathbf\Sigma_{rc}
\mathbf\Sigma_{cc}^{-1}
{}^{t_d}\mathbf\Sigma_{cr}
\right)}.
\end{equation}
\vspace{-0.4mm}
By applying singular value decomposition, the extrinsic rotation and gyroscope bias are estimated by \cite{sorkine2017least}
\begin{equation}
\begin{gathered}{}^{t_d^\star}\mathbf\Sigma_{cr}=\mathbf U\mathbf\Sigma\mathbf V^\top,\enspace \mathbf R_c^r=\mathbf V\operatorname{diag}\!\left(1,1,\det(\mathbf V\mathbf U^\top)\right)\mathbf U^\top,\\\mathbf b_g^c=\overline{\boldsymbol\omega}^c-(\mathbf R_c^r)^\top\overline{\boldsymbol\omega}^r.\end{gathered}
\label{eq:R_bg}
\end{equation}
\vspace{-0.4mm}
Here the empirical means in Eq. \eqref{eq:R_bg} are computed from the temporally aligned sequences.

\vspace{-0.mm}
\subsubsection{Translation and Accelerometer Bias Calibration}
Define the reference-to-target translation in the target frame as
\begin{equation}
\mathbf p^c_{r\to c}
=
(\mathbf R_c^r)^\top\mathbf t^r_{r\to c}.
\end{equation}
The rigid-body relation in Section \ref{subsec:imu_model} then gives
\begin{equation}
(\mathbf R_c^r)^\top\mathbf v_n^r-\mathbf v_n^c
=
-\lfloor\boldsymbol\omega_n^c\rfloor_\times
\mathbf p^c_{r\to c}.
\end{equation}
Assuming the calibration motion starts from a strict standstill, i.e., $\mathbf v^r(t_0)=\mathbf 0$ and $\boldsymbol\omega^r(t_0)=\mathbf 0$, the accelerometer bias $\mathbf b_a^c$ induces a velocity drift governed by the target-frame integration matrix
\begin{equation}
\label{eq:Qc}
\mathbf Q_n^c
=
\left(\mathbf R_c^w(t_n)\right)^\top
\int_{0}^{t_n}\mathbf R_c^w(\tau)d\tau,
\end{equation}
where $\mathbf R_c^w(t)$ maps vectors from $\Frame{c}$ to the world frame. The velocity residual is therefore
\begin{equation}
\label{eq:vel_residual}
\mathbf y_{v,n}
=
(\mathbf R_c^r)^\top\tilde{\mathbf v}_n^r
-
\tilde{\mathbf v}_n^c
=
-\lfloor\tilde{\boldsymbol\omega}_n^c\rfloor_\times
\mathbf p^c_{r\to c}
-
\mathbf Q_n^c\mathbf b_a^c.
\end{equation}
Stacking $N$ measurements gives
\begin{equation}
\begin{aligned}
\mathbf H_v
&=
\begin{bmatrix}
-\lfloor\tilde{\boldsymbol\omega}_1^c\rfloor_\times&-\mathbf Q_1^c\\
\vdots&\vdots\\
-\lfloor\tilde{\boldsymbol\omega}_N^c\rfloor_\times&-\mathbf Q_N^c
\end{bmatrix},&
\mathbf x_v
&=
\begin{bmatrix}
\mathbf p^c_{r\to c}\\
\mathbf b_a^c
\end{bmatrix},\\
\mathbf H_v\mathbf x_v&=\mathbf y_v,&
\mathbf\Lambda&=\mathbf H_v^\top\mathbf H_v.
\end{aligned}
\end{equation}
\vspace{-0.7mm}
The least-squares estimate is obtained from the pseudoinverse of $\mathbf H_v$, and the reported translation is recovered as $\mathbf t^r_{r\to c}=\mathbf R_c^r\mathbf p^c_{r\to c}$. Together with Eqs. \eqref{eq:td} and \eqref{eq:R_bg}, this yields the parameter set $\mathcal X$.

\vspace{-0mm}
\subsection{Noise Isotropy Inspired Trajectory (NIT) Generation}
\label{subsec:theorem}
\vspace{-0.3mm}
For the trajectory objective in Eq. \eqref{eq:optimal_traj}, the rotational and translational condition numbers are $\kappa_{\mathbf{R}}=\kappa_2(\mathbf\Sigma_{cr})=\kappa_2(\mathbf\Sigma_{cc})$ and $\kappa_{\mathbf{t}}=\kappa_2(\mathbf H_v)=\sqrt{\kappa_2(\mathbf\Lambda)}$, respectively.
Orthogonal harmonic excitation yields the following Noise Isotropy Proposition, leading to the generation of an anti-noise calibration trajectory $\mathcal T^*$.

\vspace{-0.5mm}
\begin{proposition}
\label{prop:nit}
For fixed physical scales $p_0,b_0>0$ and any $T>0$, let $\mathbf P=\operatorname{diag}(p_0\mathbf I_3,b_0\mathbf I_3)$ and consider the phase-zero body-rate family
\vspace{-0.3mm}
\begin{equation}
\label{eq:nit_trajectory}
\boldsymbol\omega_k(t)
=
C
\begin{bmatrix}
\sin\!\frac{2k \pi}{T}t&
\sin\!\frac{4k \pi}{T}t&
\sin\!\frac{6k \pi}{T}t
\end{bmatrix}^{\top},
\end{equation}
\vspace{-0.4mm}
where $C>0$, $k\in\mathbb N^+$, and $0\le t\le T$. Under ideal, exactly aligned continuous-time execution over complete periods, if $\mathbf R_k(0)=\mathbf I_3$ and $C=(b_0/p_0)T/\sqrt3$, then $\mathbf\Sigma_{cc,k}$ and $\mathbf\Lambda_{11,k}$ are isotropic for every finite $k$, and
\vspace{-0.3mm}
\begin{equation}
\kappa_{\mathbf{R}}
=
\kappa_2(\mathbf\Lambda_{11,k})
=1,
\ 
\inf_{k\in\mathbb N^+}
\sqrt{\kappa_2(\mathbf P^\top\mathbf\Lambda_k\mathbf P)}
=1.
\label{eq:nit_infimum}
\end{equation}
\vspace{-0.4mm}
The normalized infimum is approached as $k\to\infty$ but is unattainable by any finite nonzero member.
\end{proposition}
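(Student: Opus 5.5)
The plan is to replace the discrete sums by their continuous-time limits over $[0,T]$ (ideal, aligned execution), compute each block exactly where possible, and treat the $\mathbf Q$-dependent blocks asymptotically in $k$. Write $\boldsymbol\omega_k=C\mathbf s_k$ with $s_{k,j}(t)=\sin(2jk\pi t/T)$.

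\textbf{Rotational block.} Over complete periods each $s_{k,j}$ has zero mean. The orthogonality relation $\int_0^T s_{k,j}s_{k,l}\,dt=\tfrac T2\delta_{jl}$ then gives $\mathbf\Sigma_{cc,k}\propto \tfrac{C^2T}{2}\mathbf I_3$. Hence $\kappa_{\mathbf R}=1$, and $\mathbf\Sigma_{cr}$ inherits this because $\mathbf\Sigma_{cr}=\mathbf\Sigma_{cc}(\mathbf R_c^r)^\top$ in the noiseless case. Next, using $\lfloor\boldsymbol\omega\rfloor_\times^\top\lfloor\boldsymbol\omega\rfloor_\times=\|\boldsymbol\omega\|^2\mathbf I_3-\boldsymbol\omega\boldsymbol\omega^\top$, I get
\[
\mathbf\Lambda_{11,k}=\int_0^T\!\left(\|\boldsymbol\omega_k\|^2\mathbf I_3-\boldsymbol\omega_k\boldsymbol\omega_k^\top\right)dt=C^2T\,\mathbf I_3 .
\]
This holds exactly for every $k$, so $\kappa_2(\mathbf\Lambda_{11,k})=1$.

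\textbf{Limit $k\to\infty$.} The key estimate is that $\mathbf R_k(t)\to\mathbf I_3$ uniformly on $[0,T]$. Let $\boldsymbol\Omega_k$ be the zero-start antiderivative of $\boldsymbol\omega_k$; it has entries $C\,T(1-\cos(\cdot))/(2jk\pi)$, so $\|\boldsymbol\Omega_k\|=O(1/k)$. From $\dot{\mathbf R}=\mathbf R\lfloor\boldsymbol\omega\rfloor_\times$, integration by parts gives
\[
\mathbf R_k(t)-\mathbf I_3=\mathbf R_k\lfloor\boldsymbol\Omega_k\rfloor_\times\Big|_0^t-\int_0^t\mathbf R_k\lfloor\boldsymbol\omega_k\rfloor_\times\lfloor\boldsymbol\Omega_k\rfloor_\times d\tau=O(1/k).
\]
(A naive Gr\"onwall bound would not decay, which is why I integrate by parts.) Consequently $\mathbf Q_k(t)=t\mathbf I_3+O(1/k)$ and $\mathbf\Lambda_{22,k}\to\tfrac{T^3}{3}\mathbf I_3$. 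For the cross block, $\mathbf\Lambda_{12,k}=\int\lfloor\boldsymbol\omega_k\rfloor_\times\mathbf Q_k\,dt$. Its leading part $\int t\,\lfloor\boldsymbol\omega_k\rfloor_\times dt$ has entries $-CT^2/(2jk\pi)$. The remainder is handled by integrating by parts once more against $\boldsymbol\Omega_k$, which makes it $O(1/k)$. Thus
\[
\mathbf P^\top\mathbf\Lambda_k\mathbf P\to\operatorname{diag}\!\left(p_0^2C^2T\,\mathbf I_3,\;b_0^2\tfrac{T^3}{3}\mathbf I_3\right),
\]
and the choice $C=(b_0/p_0)T/\sqrt3$ makes both diagonal blocks equal. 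Continuity of $\kappa_2$ at a positive-definite limit gives $\kappa_2\to1$. Since $\kappa_2\ge1$ always, the infimum equals $1$.

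\textbf{Unattainability for finite $k$.} Fix a finite $k$. If $\mathbf P^\top\mathbf\Lambda_k\mathbf P=\lambda\mathbf I_6$, then the exact top block forces $\lambda=b_0^2T^3/3$, so $\operatorname{tr}\mathbf\Lambda_{22,k}=T^3$ would be required. However, orthogonality of $\mathbf R_k(t)$ gives
\[
\operatorname{tr}(\mathbf Q_k^\top\mathbf Q_k)=\Big\|\int_0^t\mathbf R_k\,d\tau\Big\|_F^2\le\Big(\int_0^t\|\mathbf R_k\|_F\,d\tau\Big)^2=3t^2 .
\]
Equality in this Frobenius triangle inequality requires all $\mathbf R_k(\tau)$ to be positively parallel, hence equal to $\mathbf I_3$, which is impossible since $\boldsymbol\omega_k\not\equiv\mathbf0$. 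Integrating gives $\operatorname{tr}\mathbf\Lambda_{22,k}<T^3$, so $\kappa_2>1$.

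\textbf{Main obstacle.} I expect the main difficulty to be making the $O(1/k)$ control of the $\mathbf Q$-dependent blocks rigorous, that is, the uniform convergence $\mathbf R_k\to\mathbf I_3$ and the decay of the cross block, since the rotation is not small pointwise in rate. The integration-by-parts device above is what I would rely on for both.
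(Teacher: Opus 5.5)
Your proof is correct. Its overall structure is the same as the paper's: exact isotropy of $\mathbf\Sigma_{cc,k}$ and $\mathbf\Lambda_{11,k}$ from complete-period sine orthogonality, uniform convergence $\mathbf R_k\to\mathbf I_3$, the block limit, balancing via $C$, and a contradiction for finite $k$. The three analytic steps, however, are done differently.

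For $\mathbf R_k\to\mathbf I_3$, the paper uses the phase-zero structure. Every component of $\boldsymbol\omega_k$ is odd about each micro-period midpoint $(2j+1)T/(2k)$, so by uniqueness of the attitude ODE the attitude retraces itself. This gives exact closure $\mathbf R_k(jT/k)=\mathbf I_3$, and a path-length bound then gives $\operatorname{dist}_{SO(3)}(\mathbf R_k(t),\mathbf I_3)\le\sqrt3CT/k$. Your integration by parts against the $O(1/k)$ antiderivative $\boldsymbol\Omega_k$ never uses that symmetry. It would therefore survive arbitrary phases, i.e.\ any uniformly bounded rate with a uniformly $O(1/k)$ antiderivative, but it does not give closure.

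For the cross block, the paper integrates $\dot{\mathbf Q}_k^c=-\lfloor\boldsymbol\omega_k\rfloor_\times\mathbf Q_k^c+\mathbf I_3$ to obtain the exact identity $\mathbf\Lambda_{12,k}=\mathbf Q_k^c(T)-T\mathbf I_3$, which is cleaner than your split. Your remainder needs no second integration by parts, since $\|\boldsymbol\omega_k\|\le\sqrt3C$ and $\mathbf Q_k-t\mathbf I_3=O(1/k)$ already bound it directly. Also, the correct sign is $\mathbf\Lambda_{12,k}=-\int_0^T\lfloor\boldsymbol\omega_k\rfloor_\times\mathbf Q_k\,dt$, though this is immaterial for a zero limit.

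For non-attainment, the paper forces $\mathbf\Lambda_{12,k}=\mathbf 0$, hence $\int_0^T\mathbf R_k\,dt=T\mathbf I_3$ and $\int_0^T\|\mathbf R_k-\mathbf I_3\|_F^2\,dt=0$. That argument rules out perfect conditioning for every $C>0$ and every scaling $\mathbf P$. Your trace-deficit argument $\operatorname{tr}\mathbf\Lambda_{22,k}<T^3$, via the Frobenius triangle inequality, is equally rigorous and avoids the off-diagonal block. It produces a contradiction only because the exact top block pins $\lambda=b_0^2T^3/3$, so it relies on $C\ge(b_0/p_0)T/\sqrt3$. That is exactly the proposition's hypothesis, so it suffices here.
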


\begin{proof}
With $t$ implicit, define the corrected target-frame design and its $3\times3$ block Gram matrix by
\begin{equation}
\begin{gathered}\mathbf Q_k^c=\mathbf R_k^\top\int_0^t\mathbf R_k(\tau)d\tau,\quad \mathbf H_{v,k}=\begin{bmatrix}-\lfloor\boldsymbol\omega_k\rfloor_\times&-\mathbf Q_k^c\end{bmatrix},\\\mathbf\Lambda_k=\int_0^T\mathbf H_{v,k}^\top\mathbf H_{v,k}dt=\begin{bmatrix}\mathbf\Lambda_{11,k}&\mathbf\Lambda_{12,k}\\\mathbf\Lambda_{12,k}^\top&\mathbf\Lambda_{22,k}\end{bmatrix}.\end{gathered}
\label{eq:nit_matrices}
\end{equation}

\textit{a) Rotational and lever-arm isotropy:}
Complete-period sine orthogonality gives zero mean and equal axis energy. Together with
$\lfloor\boldsymbol\omega\rfloor_\times^\top
\lfloor\boldsymbol\omega\rfloor_\times
=\|\boldsymbol\omega\|^2\mathbf I_3
-\boldsymbol\omega\boldsymbol\omega^\top$,
this yields
\begin{equation}
\mathbf\Sigma_{cc,k}
=(C^2/2)\mathbf I_3,
\qquad
\mathbf\Lambda_{11,k}
=TC^2\mathbf I_3.
\label{eq:nit_exact_blocks}
\end{equation}
Multiplication by the orthogonal $\mathbf R_c^r$ preserves the singular values of $\mathbf\Sigma_{cc,k}$, proving both isotropy claims.

\textit{b) Micro-period closure and high-frequency limit:}
Within every micro-period of length $T/k$, the second half of each sinusoidal body-rate component is the sign-reversed, time-reversed copy of its first half. Specifically, for $j=0,\ldots,k-1$ and $0\le s\le\frac{T}{2k}$,
\begin{equation}
\boldsymbol\omega_k\!\left(\frac{(2j+1)T}{2k}+s\right)
=-\boldsymbol\omega_k\!\left(\frac{(2j+1)T}{2k}-s\right).
\label{eq:nit_time_reversal}
\end{equation}
The attitude therefore retraces the first-half rotation in reverse and returns to its initial value at the end of the micro-period. By uniqueness of the attitude ODE, this gives $\mathbf R_k((j+1)T/k)=\mathbf R_k(jT/k)$. Since $\mathbf R_k(0)=\mathbf I_3$, induction yields $\mathbf R_k(jT/k)=\mathbf I_3$. Within each micro-period,
\begin{equation}
\sup_{t\in[0,T]}
\operatorname{dist}_{SO(3)}(\mathbf R_k(t),\mathbf I_3)
\le\sqrt3CT/k
\xrightarrow{k\to\infty}0.
\end{equation}
Thus, as $k\to\infty$, $\mathbf R_k(t)\to\mathbf I_3$ and $\mathbf Q_k^c(t)\to t\mathbf I_3$ on $[0,T]$.

\textit{c) Full Gram-matrix limit:}
From Eq. \eqref{eq:nit_matrices} and the uniform limits above, with the following arrows taken as $k\to\infty$,
\begin{equation}
\begin{aligned}
\dot{\mathbf Q}_k^c
&=-\lfloor\boldsymbol\omega_k\rfloor_\times\mathbf Q_k^c+\mathbf I_3,\\
\mathbf\Lambda_{12,k}
&=\mathbf Q_k^c(T)-T\mathbf I_3\longrightarrow\mathbf0,\\
\mathbf\Lambda_{22,k}
&=\int_0^T(\mathbf Q_k^c)^\top\mathbf Q_k^c\,dt
\longrightarrow\frac{T^3}{3}\mathbf I_3.
\end{aligned}
\end{equation}
Consequently, $\mathbf\Lambda_k
\longrightarrow
\operatorname{diag}\!\left(
TC^2\mathbf I_3,\frac{T^3}{3}\mathbf I_3
\right).$

\textit{d) Spectral balance:}
Eigenvalue continuity gives
\begin{equation}
\lim_{k\to\infty}\kappa_2(\mathbf P^\top\mathbf\Lambda_k\mathbf P)
=
\frac{\max\{p_0^2TC^2,b_0^2T^3/3\}}
{\min\{p_0^2TC^2,b_0^2T^3/3\}},
\end{equation}
so $C=(b_0/p_0)T/\sqrt3$ proves Eq. \eqref{eq:nit_infimum}. Raw-SI values set $p_0=1~\mathrm m$ and $b_0=1~\mathrm{m/s^2}$.

\textit{e) Non-attainment:}
If a finite nonzero member made $\mathbf P^\top\mathbf\Lambda_k\mathbf P$ a scalar identity, then $\mathbf\Lambda_{12,k}=\mathbf0$ and $\mathbf Q_k^c(T)=T\mathbf I_3$. Since $\mathbf R_k(T)=\mathbf I_3$, this gives $\int_0^T\mathbf R_k(t)dt=T\mathbf I_3$ and hence
\begin{equation}
\int_0^T\|\mathbf R_k(t)-\mathbf I_3\|_F^2dt
=
6T-2\operatorname{tr}\!\left(\int_0^T\mathbf R_k(t)dt\right)
=0.
\end{equation}
The nonnegative integrand therefore vanishes identically, giving $\mathbf R_k(t)\equiv\mathbf I_3$ and hence $\boldsymbol\omega_k(t)\equiv\mathbf0$, which contradicts $C>0$. A rank-deficient member instead has infinite condition number, so no finite nonzero member attains Eq. \eqref{eq:nit_infimum}.
\end{proof}

\begin{remark}
For full-rank systems under sufficiently small perturbations, let $\epsilon_\Sigma=\|\Delta\mathbf\Sigma_{cr}\|_F/\|\mathbf\Sigma_{cr}\|_F$. Higham's polar-factor bound \cite{higham1986computing} and Eq. \eqref{eq:condition_error_bound} give
\vspace{-0.5mm}
\begin{equation}
\begin{aligned}
\frac{\|\Delta\mathbf R_c^r\|_F}{\|\mathbf R_c^r\|_F}
&\le
\sqrt3(1+\sqrt2)\kappa_{\mathbf{R}}
\epsilon_\Sigma
\!+\mathcal O(\epsilon_\Sigma^2),\\
\|\Delta\mathbf x_v\|/\|\mathbf x_v\|
&\le
\kappa_{\mathbf{t}}
\|\Delta\mathbf y_v\|/\|\mathbf y_v\|
\!.
\end{aligned}
\label{eq:calibration_perturbation}
\end{equation}
Under unit conversion, $\mathbf P$ and $\mathbf H_v$ scale inversely. Hence $\mathbf H_v\mathbf P$ and its condition number are unchanged, and the proposed trajectory is physically invariant.
\end{remark}

\begin{remark}
The trajectory used for both simulation and real-world deployment,
expressed in $\mathrm{rad/s}$ for $0\le t\le3.2~\mathrm{s}$ is
\begin{equation}
\boldsymbol\omega_2(t)=\begin{bmatrix}\sin(1.25\pi t)&\sin(2.5\pi t)&\sin(3.75\pi t)\end{bmatrix}^{\top}.
\label{eq:nit_experimental_trajectory}
\end{equation}
To ensure safe execution on real robots, simulation tests with the same kinematic model established $C=1.0~\mathrm{rad/s}$ as a safe amplitude. Under fixed SI scaling, $C=T/\sqrt3$ suggests a one-period window of approximately $1.73~\mathrm{s}$. We choose the nearby $T_{\mathrm{per}}=1.6~\mathrm{s}$, giving the fundamental frequency $0.625~\mathrm{Hz}$. For calibration, this waveform is repeated twice, giving $T=3.2~\mathrm{s}$ to reduce the influence of the less-stable startup transient.
For the resulting theoretical trajectory, continuous-time evaluation gives $\kappa_{\mathbf{R}}=1.000$ and $\kappa_{\mathbf{t}}\approx1.872$.
In deployment, finite sampling and frequency leakage have negligible conditioning effects: across 200--1000~Hz processing and up to 100-ms truncation, $\kappa_{\mathbf{t}}$ changed by less than $1\%$.
\end{remark}

\subsection{Contact-Constrained Virtual IMU Fusion}
\label{subsec:vimu}
To construct a reliable calibration reference, we fuse calf IMUs pre-calibrated by A$^2$I-Calib into a contact-constrained VIMU.
For the $m$-th leg, let $\Frame{m}$ and $\Frame{q}$ denote its calf-link and calf-IMU frames, respectively. The preliminary stage supplies the time offset $\widehat{t}_d^q$, IMU-to-calf rotation $\widehat{\mathbf{R}}_q^m$, and gyroscope bias $\widehat{\mathbf{b}}_g^q$.
These parameters yield the time-aligned, bias-corrected calf angular velocity used in kinematic propagation:
\begin{equation}
\boldsymbol{\omega}_{calf}^{m}(t)
=
\widehat{\mathbf{R}}_q^m
\left[
\widetilde{\boldsymbol{\omega}}^q
\left(t+\widehat{t}_d^q\right)
-\widehat{\mathbf{b}}_g^q
\right].
\label{eq:calf_angular_velocity_correction}
\end{equation}
\subsubsection{Kinematic Propagation under Pure-Rolling Constraint}
During active calibration on level ground, the grounded feet are assumed to maintain pure-rolling contact, yielding zero instantaneous linear velocity $\mathbf{v}_{contact}^m \equiv \mathbf{0}$, as illustrated in Fig. \ref{fig:vimu}. Thus the linear velocity of the robot's calf is given by
\begin{equation}
    \mathbf{v}_{calf} = \mathbf{v}_{contact} + \boldsymbol{\omega}_{foot} \times \mathbf{r}_{foot} = \boldsymbol{\omega}_{foot} \times \mathbf{r}_{foot}.
\end{equation}
For the $m$-th supporting leg of a robot, initialized with $\mathbf{v}_{calf}^m$ and given the joint position $\mathbf{q}^{m}$, joint velocity $\dot{\mathbf{q}}^{m}$ and the calibrated calf angular velocity $\boldsymbol{\omega}^m_{calf}$ in \eqref{eq:calf_angular_velocity_correction}, the robot's base angular and linear velocities are recursively propagated upward through the forward kinematics chain $\text{FK}$:

\begin{equation}
\begin{aligned}
    \boldsymbol{\omega}^{v^m} &= \text{FK}_\omega \left( \boldsymbol{\omega}^m_{calf}, \mathbf{q}^m, \dot{\mathbf{q}}^m \right) \\
    \mathbf{v}^{v^m} &= \text{FK}_v \left( \mathbf{v}_{calf}^m, \boldsymbol{\omega}^m_{calf}, \mathbf{q}^m, \dot{\mathbf{q}}^m \right),
\end{aligned}
\end{equation}
where $\boldsymbol{\omega}^{v^m}$ and $\mathbf{v}^{v^m}$ are the $m$-th leg VIMU's measurements mapped to the trunk as shown in Fig. \ref{fig:vimu}.

\begin{figure*}[!t]
\begin{minipage}[t]{0.49\textwidth}
\vspace{0pt}
\centering
\includegraphics[width=0.9\linewidth]{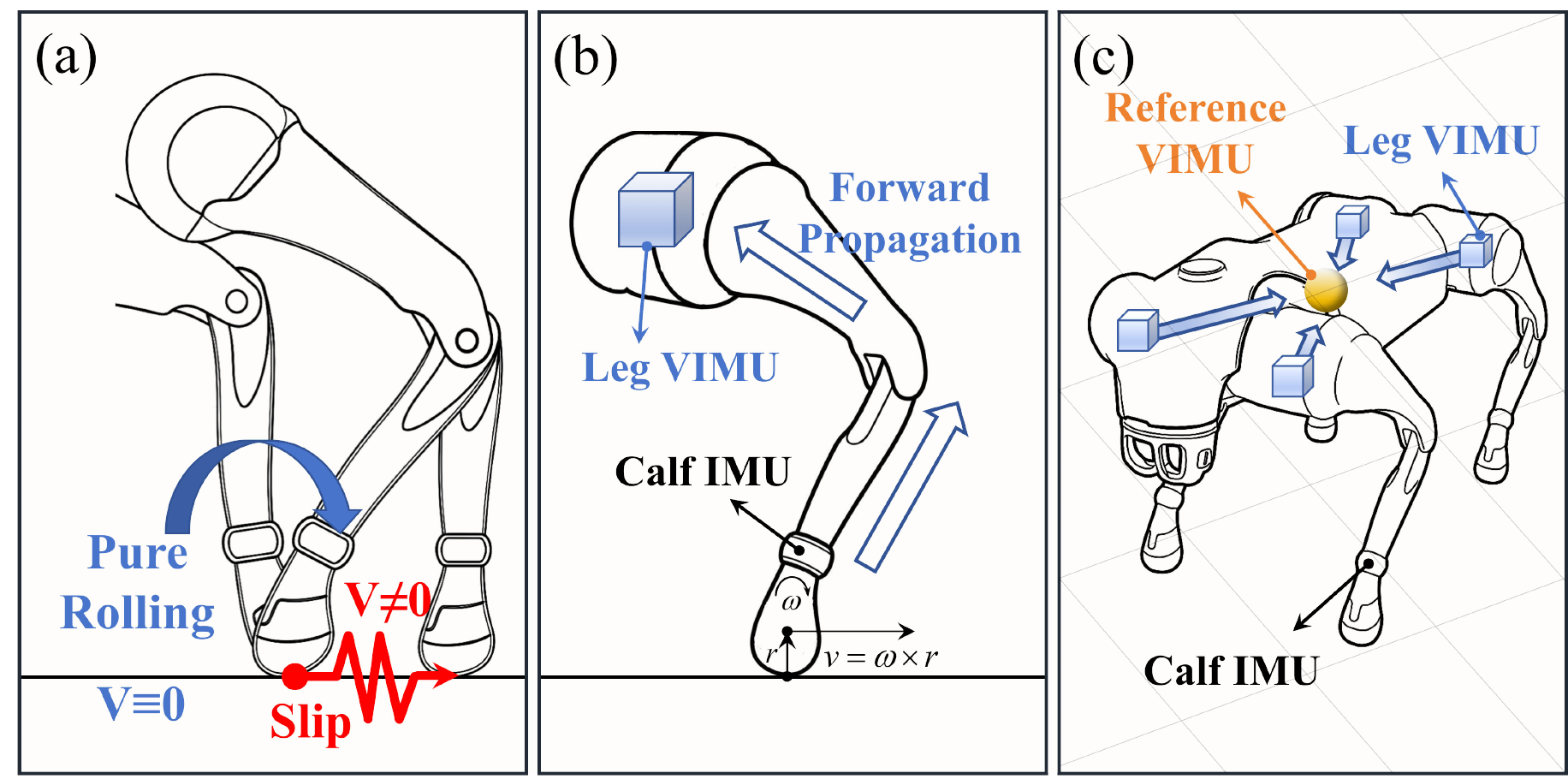}
\captionof{figure}{Kinematic propagation and VIMU fusion. (a) compares the pure-rolling contact with foot slip. (b) shows the propagation of forward kinematics to construct a leg VIMU. (c) demonstrates the fusion of multiple VIMUs into a reference VIMU.}
\label{fig:vimu}
\end{minipage}
\hfill
\begin{minipage}[t]{0.49\textwidth}
\vspace{0pt}
\centering
\footnotesize
\renewcommand{\arraystretch}{1.0}
\captionof{table}{Observations, Actions, and Rewards of the AMP Controller}
\vspace{-2mm}
\label{tab:obs}
\begin{tabular}{lcccc}
\toprule
\textbf{Features} & \textbf{Sym.} & \textbf{Dim.} & \textbf{Priv.} & \textbf{Policy} \\
\midrule
Foot contact mode       & $\mathbf{c}_{foot}$     & 4  & \cmark & \xmark \\
Base linear velocity    & $\mathbf{v}_B$          & 3  & \cmark & \xmark \\
Base angular velocity   & $\boldsymbol{\omega}_B$ & 3  & \cmark & \cmark \\
Projected gravity       & $\mathbf{g}_B$          & 3  & \cmark & \cmark \\
Joint positions         & $\mathbf{q}$            & 12 & \cmark & \cmark \\
Joint velocities        & $\dot{\mathbf{q}}$      & 12 & \cmark & \cmark \\
Previous actions        & $\mathbf{u}_{t-1}$      & 12 & \cmark & \cmark \\
Motion phase            & $\boldsymbol{\phi}$     & 2  & \cmark & \cmark \\
\midrule
Joint position deviations & $\mathbf{u}_t$          & 12 & \multicolumn{2}{c}{\textit{Policy Output}} \\
\midrule
Anti-slip task reward   & $r_t$                   & 1  & \multicolumn{2}{c}{\textit{Weight: $w_t = 0.2$}} \\
Style reward            & $r_s$                   & 1  & \multicolumn{2}{c}{\textit{Weight: $w_s = 0.8$}} \\
\bottomrule
\end{tabular}
\end{minipage}
\vspace{-3mm}
\end{figure*}

\subsubsection{Multi-Leg VIMU Fusion and Noise Suppression}
The $M$ supporting-leg VIMU estimates are fused by arithmetic averaging:
\vspace{-0.6mm}
\begin{equation}
    \tilde{\boldsymbol{\omega}}^r = \frac{1}{M} \sum_{m=1}^M \boldsymbol{\omega}^{v^m}, \quad
    \tilde{\mathbf{v}}^r = \frac{1}{M} \sum_{m=1}^M \mathbf{v}^{v^m}.
\end{equation}
\vspace{-0.mm}
Each leg VIMU combines calf-IMU measurements, joint encoders, forward kinematics, and the contact model, so the effective leg errors can be correlated. For either the angular or linear channel, let $\mathbf e_m$ denote the error of leg $m$, $\mathbf\Sigma_{ij}=\operatorname{Cov}(\mathbf e_i,\mathbf e_j)$, and $\bar{\mathbf\Sigma}=M^{-1}\sum_i\mathbf\Sigma_{ii}$. The fused error $\bar{\mathbf e}=M^{-1}\sum_m\mathbf e_m$ has covariance
\vspace{-1mm}
\begin{equation}
\mathbf\Sigma_{\bar e}
=\frac{1}{M^2}\sum_{i,j=1}^M\mathbf\Sigma_{ij}
=\frac{\bar{\mathbf\Sigma}}{M}
+\frac{1}{M^2}\sum_{i\ne j}\mathbf\Sigma_{ij}.
\label{eq:vimu_general_covariance}
\end{equation}
\vspace{-0.mm}
For mutually uncorrelated, equal-covariance leg errors, $\mathbf\Sigma_{ij}=\mathbf0$ for $i\ne j$ and $\mathbf\Sigma_{ii}=\mathbf\Sigma$, so Eq. \eqref{eq:vimu_general_covariance} reduces to $\mathbf\Sigma_{\bar e}=\mathbf\Sigma/M$. Four-leg fusion then gives a $75\%$ covariance-trace reduction \cite{yu2023fast}. Thus, averaging suppresses uncorrelated errors, while cross-covariance limits the attainable reduction.

If $s_m\leq M\|\mathbf y_v\|e_{\mathbf t,\max}/(\sqrt N\,\kappa_{\mathbf t}\|\mathbf x_v\|)$, where $s_m$ is leg $m$'s window-RMS tangential slip speed and $e_{\mathbf t,\max}$ its allowable translation-error increment, pure-rolling contact is adequate.
As the condition number bounds the worst case, this is an upper bound, not a strict threshold.

\vspace{-1mm}
\subsection{Constraint-Aware Adversarial Motion Prior Controller}
\label{subsec:amp}

Directly tracking the theoretical NIT can expose a gap between the ideal reference and feasible robot motion, causing foot slip, contact-induced noise, and instability. We therefore employ a constraint-aware AMP controller \cite{peng2022amp} to learn the reference state-transition distribution and retain its anti-noise excitation, while encouraging pure-rolling contact to reduce disturbances in the kinematic reference.

\vspace{-0.5mm}
\subsubsection{Reference Dataset}
Using a Unitree Go2 kinematic model, a traditional controller tracks the reference trajectory and collects robot states as the reference dataset, as shown in Fig. \ref{fig:system_overview}(b). The following reward design balances excitation retention against contact violations introduced by traditional controllers.

\vspace{-0.5mm}
\subsubsection{Reward Design} 
The style reward $r_s$ encourages generated state transitions to follow the reference distribution and preserve its anti-noise excitation, while the task reward $r_t$ penalizes translational slip to encourage pure-rolling contact:
\vspace{-1mm}
\begin{equation}
    r_{t} = \sum_{m \in \mathcal{C}} \exp\left(-\|\mathbf{v}_{foot}^m - \boldsymbol{\omega}_{foot}^m \times \mathbf{r}_{foot}\|/\sigma_{slip}\right)
\end{equation}
\vspace{-0mm}
where $\mathcal{C}$ denotes the set of grounded feet, and $\mathbf{r}_{foot}$ points from the contact point to the foot center as shown in Fig. \ref{fig:vimu}. The residual vanishes under ideal spherical rolling and grows with departures from that relation. The total reward is
$r = w_{s} r_{s} + w_{t} r_{t}$,
where $w_s$ and $w_t$ balance the two terms.

\vspace{-0mm}
\subsubsection{Observation and Action Spaces}
The asymmetric actor-critic uses privileged critic inputs only during training. Its 12-D action specifies target joint-position deviations tracked by a 50 Hz Proportional-Derivative (PD) controller. The deployable observations and complete spaces are summarized in Tab. \ref{tab:obs}.
\subsubsection{Domain Randomization}
During training, we randomize ground friction, base mass, and joint PD gains, and apply random pushes to improve robustness to friction variations and external disturbances.

\begin{figure}[!b]
\centering
\includegraphics[width=0.85 \linewidth]{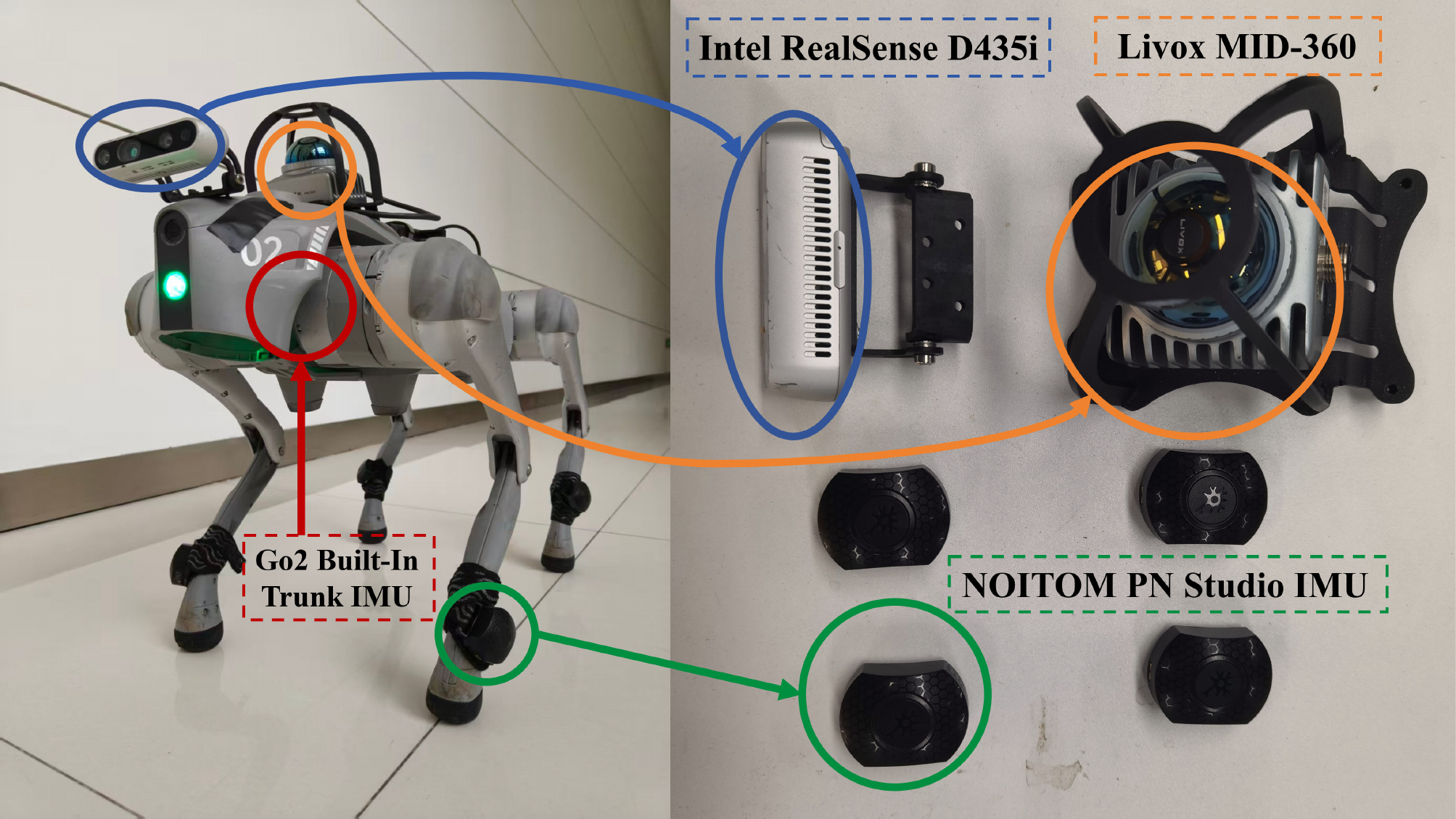}\caption{Unitree Go2 Platform with Noitom PN Studio suite, Intel RealSense D435i and Livox MID-360.}
\label{fig:platform}
\end{figure}

\begin{table}[!t]
\caption{Comparison of Algorithm Runtime}
\vspace{-2mm}
\label{tab:runtime}
\centering
\renewcommand{\arraystretch}{1.38}
\setlength{\tabcolsep}{3pt}
\resizebox{\linewidth}{!}{
\begin{tabular}{c@{\hspace{12pt}}ccccc@{}c@{}c}
\toprule
\multirow{2}{*}{\textbf{Duration}} &
\multicolumn{5}{c}{\textbf{A$^2$I-Calib++}} &
\hspace{12pt} & \textbf{iKalibr} \\
\cmidrule(lr){2-6}\cmidrule(lr){8-8}
& Search & SVD & LS & Others & Total & & Total \\
\midrule
3.200 & 0.036 & 0.005 & 0.004 & 2.000 & $2.045\pm0.068$ & & $2.941\pm2.105$ \\
6.400 & 0.043 & 0.005 & 0.006 & 2.030 & $2.084\pm0.070$ & & $3.555\pm1.123$ \\
\bottomrule
\end{tabular}}
\begin{minipage}{0.98\linewidth}
\footnotesize\setlength{\baselineskip}{8.5pt} \textsuperscript{*} All entries are in seconds. Total reports sample mean $\pm$ sample standard deviation; the component columns report means.
\end{minipage}
\par\medskip
\renewcommand{\arraystretch}{1.0}
\setlength{\tabcolsep}{6pt}
\caption{Camera-IMU Calibration Sensitivity to Calf-IMU Residuals in Simulation Experiments}
\vspace{-2mm}
\label{tab:precalib_sensitivity}
\centering
\begin{tabular}{c*{5}{c}}
\toprule
\multirow{2}{*}{\textbf{Calf Residual}} & \multicolumn{5}{c}{\textbf{Calibration Error / Displacement}} \\
\cmidrule(l){2-6}
& $|\delta t_d|$
& $\|\delta\boldsymbol{\theta}\|$
& $\|\delta\mathbf p\|$
& $\|\delta\mathbf b_g\|$
& $\|\delta\mathbf b_a\|$ \\
\midrule
Zero Residual & $0.110$ & $0.022$ & $0.292$ & $0.0026$ & $0.414$ \\
$|\delta t_d|=2$ & $0.252$ & $0.059$ & $0.045$ & $0.0007$ & $0.027$ \\
$\|\delta\boldsymbol{\theta}\|=3$ & $0.032$ & $0.546$ & $0.463$ & $0.0021$ & $0.026$ \\
$\|\delta\mathbf b_g\|=0.01$ & $0.004$ & $0.012$ & $0.007$ & $0.0050$ & $0.020$ \\
All Residuals & $0.252$ & $0.550$ & $0.465$ & $0.0055$ & $0.042$ \\
\bottomrule
\end{tabular}
\par\vspace{0.6mm}
\begin{minipage}{0.98\linewidth}
\footnotesize\setlength{\baselineskip}{8.5pt}
Time offset $\delta t_d$ in (\textit{ms}), rotation $\delta\boldsymbol{\theta}$ in (\textit{deg}), translation $\delta\mathbf p$ in (\textit{cm}), gyroscope bias $\delta\mathbf b_g$ in (\textit{rad/s}), and accelerometer bias $\delta\mathbf b_a$ in (\textit{cm/s}$^2$).
\end{minipage}
\vspace{-2.5mm}
\end{table}

\vspace{-3mm}
\section{Experiments}

\subsection{Experimental Setup}
\label{subsec:exp_setup}

\subsubsection{Development of Experimental Platforms}

The AMP controller is trained in Isaac Gym \cite{rudin2022learning} on an NVIDIA RTX 4060 GPU, while Gazebo evaluation uses four 100-Hz calf IMUs, a 500-Hz trunk IMU, and a 100-Hz camera IMU with additive Gaussian white noise. Real-world experiments use a Unitree Go2 with four 96-Hz Noitom calf IMUs, a 500-Hz trunk IMU, a 200-Hz Intel RealSense D435i camera IMU, and a Livox MID-360 LiDAR, as shown in Fig. \ref{fig:platform}.

\vspace{+1.mm}
\subsubsection{Comparison Baselines}

We separate three experimental factors: reference motion, execution controller, and calibration solver. \textbf{NIT + AMP (A$^2$I-Calib++)} executes the proposed NIT reference with AMP, while Sinusoid + AMP (w/o NIT) uses a sinusoidal reference to isolate NIT's contribution. NIT w/o AMP (w/o AMP) and Sinusoid w/o AMP (Sinusoid) execute the same references without AMP. Walking is a conventional forward-and-turning locomotion baseline. For robot safety, only the two AMP-based motions and Walking are deployed on hardware.

At the solver level, the proposed Search-and-Solve procedure is compared with the IMU-only calibration of iKalibr \cite{chen2025ikalibr}. Table \ref{tab:runtime} reports their algorithm runtime for the two calibration durations, where the Others category includes resampling and derivative computation, calf-IMU preliminary calibration, window preprocessing, and base-VIMU construction. The runtime therefore changes little when the calibration motion duration doubles because the preprocessing dominates the computation.
A$^2$I-Calib++ is faster in these experiments using lightweight correlation search, SVD, and least squares, whereas iKalibr performs iterative nonlinear optimization.

\vspace{-1.1mm}
\subsection{Sensitivity to Preliminary Calf-IMU Calibration}
\label{subsec:precalib_sensitivity}

To quantify sensitivity to calf-IMU calibration errors, Table \ref{tab:precalib_sensitivity} reports how leg-wise residuals in $t_d$, $\mathbf R$, and $\mathbf b_g$ change all five estimated parameters of the camera IMU. The residuals are normalized to the prescribed four-leg RMS levels. Zero Residual retains the simulated Gaussian IMU noise without additional preliminary-calibration residuals, while the remaining rows report the mean parameter displacements caused by individual or simultaneous residuals. The full cross-parameter results capture how preliminary errors propagate through VIMU construction to the target-IMU calibration.

\begin{table}[t]
\caption{Foot-Slip, Contact, VIMU Metrics, and Motion Quality of Different Calibration Motions}
\vspace{-2mm}
\label{tab:contact_vimu_validation}
\centering
\renewcommand{\arraystretch}{1.05}
\setlength{\tabcolsep}{1.5pt}
\resizebox{\linewidth}{!}{
\begin{tabular}{ccc c@{\,/\,}c c@{\,/\,}c c@{\,/\,}c >{\centering\arraybackslash}m{0.80cm}}
\toprule
\multirow{2}{*}{\textbf{Motion}}
& \textbf{Slip}
& \textbf{Contact}
& \multicolumn{4}{c}{\textbf{VIMU Error\textsuperscript{*}}}
& \multicolumn{2}{c}{\textbf{Condition}}
& \textbf{Track.} \\
& ($\mathrm{m/s}$)
& ($\%$)
& \multicolumn{2}{c}{$\|\Delta\boldsymbol{\omega}\|$}
& \multicolumn{2}{c}{$\|\Delta\mathbf{v}\|$}
& \multicolumn{2}{c}{$\kappa_{\mathbf{R}}/\kappa_{\mathbf{t}}$}
& \\
\midrule
\multicolumn{10}{c}{\textit{Simulation Experiments: Ground Truth Available}} \\
\midrule
A$^2$I-Calib++ & \secondcell{0.005} & \bestcell{100.00} & \bestcell{0.005} & 45.3 & \secondcell{0.003} & \secondcell{41.4} & \secondcell{1.46} & \bestcell{1.64} & 1.89 \\
w/o NIT & \bestcell{0.001} & \bestcell{100.00} & \secondcell{0.005} & 37.3 & \bestcell{0.002} & 27.5 & 275.70 & 9.85 & 0.98 \\
w/o AMP & 0.021 & 83.25 & 0.008 & \secondcell{48.7} & 0.012 & \bestcell{45.9} & \bestcell{1.19} & \secondcell{2.14} & \secondcell{0.10} \\
Sinusoid & 0.032 & 76.15 & 0.010 & 42.9 & 0.020 & 35.4 & 337.70 & 26.47 & \bestcell{0.08} \\
Walking & 1.431 & 60.79 & 0.490 & \bestcell{49.1} & 1.013 & 36.6 & 14.72 & 3.40 & -- \\
\midrule\midrule
\multicolumn{10}{c}{\textit{Real-World Experiments: Proxy Estimates}} \\
\midrule
A$^2$I-Calib++ & \secondcell{0.038} & \bestcell{100.00} & \secondcell{0.158} & 26.5 & \secondcell{0.043} & \secondcell{28.3} & \bestcell{7.40} & \bestcell{3.36} & \secondcell{1.44} \\
w/o NIT & \bestcell{0.035} & \bestcell{100.00} & \bestcell{0.108} & \secondcell{27.8} & \bestcell{0.033} & 16.9 & 83.82 & 16.08 & \bestcell{1.05} \\
Walking & 0.302 & 60.48 & 1.659 & \bestcell{32.6} & 0.496 & \bestcell{40.9} & \secondcell{8.34} & \secondcell{8.96} & -- \\
\bottomrule
\end{tabular}}
\begin{minipage}{0.98\linewidth}
\footnotesize\setlength{\baselineskip}{8.5pt}
\textsuperscript{*} Each VIMU entry reports four-leg RMSE / reduction from one-leg RMSE. Angular-velocity and linear-velocity errors are in (\textit{rad/s}) and (\textit{m/s}), respectively; reductions are in (\textit{\%}), and Tracking is in (\textit{rad/s}).
\par\textsuperscript{$\dagger$} Throughout this paper, the best values are shown in boldface, whereas the second-best values are underlined.
\end{minipage}
\vspace{-2.5mm}
\end{table}

For input levels of $2~\mathrm{ms}$, $3^\circ$, and $0.01~\mathrm{rad/s}$, the corresponding camera-IMU time-offset, rotation, and gyroscope-bias shifts are $0.252~\mathrm{ms}$, $0.546^\circ$, and $0.0050~\mathrm{rad/s}$, respectively. All remain below the injected residual magnitudes, indicating limited sensitivity to leg-dependent preliminary-calibration errors.

\vspace{-1.1mm}
\subsection{Contact-Aware Motion and VIMU Validation}
\label{subsec:contact_vimu_validation}

Table \ref{tab:contact_vimu_validation} evaluates VIMU fusion under different calibration motions. Slip measures the horizontal pure-rolling violation, Contact is the mean per-foot contact duty, and VIMU Error reports the four-leg vector RMSE and its reduction from the one-leg RMSE for angular and linear velocity. Condition gives $\kappa_{\mathbf{R}}/\kappa_{\mathbf{t}}$, while Tracking is the angular-velocity execution-gap vector RMSE against the reference trajectory. Simulation uses ground truth, whereas the real-world Slip and VIMU Error entries are proxy estimates because ground truth is unavailable.

In simulation, AMP reduces the four-leg angular- and linear-velocity errors for both reference motions. On hardware, the AMP-based motions yield smaller reference-consistency proxies than Walking. A large relative reduction can accompany a large remaining error when the single-leg error is high. The fused error must therefore be considered together with the reduction percentage.

\begin{table*}[!t]
\renewcommand{\arraystretch}{1.3}
\setlength{\tabcolsep}{4pt}
\caption{Comparison with Conventional Motions of Trajectory Quality and  Calibration Results in Simulation Experiments}
\vspace{-2mm}
\label{tab:sims_ablation}
\centering
\resizebox{0.99\linewidth}{!}{
\begin{tabular}{c|*{3}{c@{\pmsep}c}|*{4}{c@{\pmsep}c}>{\centering\arraybackslash}p{0.72cm}@{\pmsep}>{\centering\arraybackslash}p{0.72cm}}
\toprule
\multicolumn{1}{c|}{\textbf{Settings}} & \multicolumn{6}{c|}{\textbf{Trajectory Quality (Mean $\pm$ STD)}} & \multicolumn{10}{c}{\textbf{Calibration Accuracy (RMSE $\pm$ STD)}} \\
\cmidrule(r){1-1} \cmidrule(lr){2-7} \cmidrule(l){8-17}

\multicolumn{1}{c|}{Motion} & \multicolumn{2}{c}{Trace Corr.} & \multicolumn{2}{c}{Rot. Cond.} & \multicolumn{2}{c|}{Trans. Cond.} & \multicolumn{2}{c}{Time Offset} & \multicolumn{2}{c}{Extri. Rot.} & \multicolumn{2}{c}{Gyro. Bias} & \multicolumn{2}{c}{Extri. Trans.} & \multicolumn{2}{c}{Accel. Bias} \\

\multicolumn{1}{c|}{Style} & \multicolumn{2}{c}{$r_{max}$ (\textit{\%}) $\uparrow$} & \multicolumn{2}{c}{$\kappa_{\mathbf{R}}$ $\downarrow$} & \multicolumn{2}{c|}{$\kappa_{\mathbf{t}}$ $\downarrow$} & \multicolumn{2}{c}{$|\delta t_d|$ (\textit{ms}) $\downarrow$} & \multicolumn{2}{c}{$\|\delta \boldsymbol{\theta}\|$ (\textit{deg}) $\downarrow$} & \multicolumn{2}{c}{$\|\delta \mathbf{b}_g\|$ (\textit{rad/s}) $\downarrow$} & \multicolumn{2}{c}{$\|\delta \mathbf{p}\|$ (\textit{cm}) $\downarrow$} & \multicolumn{2}{c}{$\|\delta \mathbf{b}_a\|$ (\textit{cm/s}$^2$) $\downarrow$} \\
\midrule

\textbf{A$^2$I-Calib++} & \secondleft{99.9} & \secondright{0.0} & \secondleft{1.46} & \bestright{0.01} & \bestleft{1.640} & \secondright{0.006} & \bestleft{0.19} & \bestright{0.05} & \bestleft{0.177} & \bestright{0.062} & 0.002 & 0.001 & \bestleft{0.287} & \secondright{0.066} & 0.66 & 0.26 \\
w/o NIT      & 99.3 & 0.1 & 280.64 & 3.73 & 9.862 & 0.066 & 0.38 & 0.15 & 0.675 & 0.266 & \secondleft{0.001} & \secondright{0.001} & 3.390 & 0.342 & 0.54 & 0.20 \\
w/o AMP      & \bestleft{99.9} & \bestright{0.0} & \bestleft{1.19} & \secondright{0.01} & \secondleft{2.033} & \bestright{0.004} & \secondleft{0.24} & \secondright{0.05} & \secondleft{0.191} & \secondright{0.065} & \bestleft{0.001} & \bestright{0.001} & \secondleft{0.335} & \bestright{0.044} & \bestleft{0.47} & \secondright{0.19} \\
Sinusoid     & 97.3 & 0.1 & 349.89 & 3.07 & 29.492 & 0.602 & 8.38 & 2.19 & 0.410 & 0.129 & 0.002 & 0.001 & 2.267 & 1.231 & \secondleft{0.50} & \bestright{0.14} \\
Walking      & 90.8 & 11.5 & 14.77 & 2.01 & 3.461 & 0.240 & 107.72 & 101.51 & 2.500 & 1.906 & 0.020 & 0.017 & 21.241 & 4.039 & 31.68 & 2.16 \\
\bottomrule
\end{tabular}
}
\vspace{-2mm}
\end{table*}

\begin{table*}[!t]
\caption{Comparison with Conventional Motions and SOTA Solver of Spatiotemporal Calibration Repeatability (Mean $\pm$ STD) for Camera IMU in Real-World Experiments}
\vspace{-2mm}
\label{tab:real_run24_camera_repeatability}
\centering
\renewcommand{\arraystretch}{1.2}
\setlength{\tabcolsep}{3pt}
\resizebox{0.99\linewidth}{!}{
\begin{tabular}{cc|*{7}{c@{\pmsep}c}}
\toprule
\multicolumn{2}{c|}{\textbf{Setting}}
& \multicolumn{2}{c|}{\textbf{Time Calibration}}
& \multicolumn{6}{c|}{\textbf{Rotation Calibration}}
& \multicolumn{6}{c}{\textbf{Translation Calibration}} \\
\cmidrule(r){1-2}\cmidrule(lr){3-4}\cmidrule(lr){5-10}\cmidrule(l){11-16}
\textbf{Motion}
& \multicolumn{1}{c|}{\makebox[3.333em]{\textbf{Solver}}}
& \multicolumn{2}{c|}{$t_d$ ($\mathrm{ms}$)}
& \multicolumn{2}{c}{$R_x$ ($\mathrm{deg}$)}
& \multicolumn{2}{c}{$R_y$ ($\mathrm{deg}$)}
& \multicolumn{2}{c|}{$R_z$ ($\mathrm{deg}$)}
& \multicolumn{2}{c}{$p_x$ ($\mathrm{cm}$)}
& \multicolumn{2}{c}{$p_y$ ($\mathrm{cm}$)}
& \multicolumn{2}{c}{$p_z$ ($\mathrm{cm}$)} \\
\midrule
\multirow{2}{*}{A$^2$I-Calib++}
& A$^2$I-Calib++ & 105.532 & 0.777 & -124.375 & 0.052 & 1.468 & \bestright{0.003} & -91.738 & \bestright{0.017} & 34.611 & \bestright{0.054} & 0.878 & \bestright{0.126} & 3.378 & \secondright{0.145} \\
& iKalibr & 110.534 & \secondright{0.274} & -125.797 & 0.054 & 2.206 & 0.030 & -92.320 & 0.196 & 35.114 & \bestright{0.054} & 1.399 & \secondright{0.251} & 4.974 & \bestright{0.081} \\
\cmidrule(lr){1-2}
\multirow{2}{*}{w/o NIT}
& A$^2$I-Calib++ & 105.370 & 0.940 & -124.102 & 0.069 & 1.577 & 0.047 & -91.769 & \secondright{0.023} & 32.163 & 0.619 & -2.290 & 0.358 & 0.086 & 0.326 \\
& iKalibr & 106.452 & \bestright{0.179} & -126.286 & \bestright{0.013} & 2.355 & \secondright{0.027} & -91.884 & 0.039 & 35.217 & 0.338 & 0.292 & 0.368 & 5.203 & 0.347 \\
\cmidrule(lr){1-2}
\multirow{2}{*}{Walking}
& A$^2$I-Calib++ & 103.309 & 1.906 & -124.285 & 0.346 & 1.371 & 1.122 & -96.528 & 0.797 & 20.304 & 1.150 & -3.803 & 2.758 & -4.828 & 0.364 \\
& iKalibr & 104.205 & 2.142 & -125.772 & \secondright{0.016} & 1.227 & 0.060 & -89.303 & 1.447 & 39.616 & 2.453 & 2.878 & 1.683 & 3.447 & 0.474 \\
\bottomrule
\end{tabular}}
\vspace{-5mm}
\end{table*}

\begin{table}[!t]
\caption{Comparison of Spatiotemporal Calibration Accuracy (RMSE $\pm$ STD) in Simulation Experiments}
\vspace{-2mm}
\label{tab:sim_sota_transposed}
\centering
\renewcommand{\arraystretch}{1.25}
\setlength{\tabcolsep}{2.5pt}
\resizebox{\linewidth}{!}{
\begin{tabular}{cc|c c@{\pmsep}c c@{\pmsep}c c@{\pmsep}c}
\toprule
\makebox[3.333em]{\textbf{Solver}} & \textbf{Evaluation}
& \textbf{SR} $\uparrow$
& \multicolumn{2}{c}{$|\delta t_d|$ $\downarrow$}
& \multicolumn{2}{c}{$\|\delta\boldsymbol{\theta}\|$ $\downarrow$}
& \multicolumn{2}{c}{$\|\delta\mathbf p\|$ $\downarrow$} \\
\midrule
A$^2$I-Calib++ & All Trials
& \bestcell{9/9}
& \bestleft{0.19} & \bestright{0.05}
& \bestleft{0.18} & \bestright{0.06}
& \secondleft{0.29} & \secondright{0.07} \\
\midrule
\multirow{2}{*}{iKalibr}
& All Trials
& \secondcell{6/9}
& 226.13 & 183.42
& 93.05 & 75.82
& 34.83 & 28.29 \\
& Succeeded Only\textsuperscript{*}
& --
& \secondleft{2.95} & \secondright{0.41}
& \secondleft{0.20} & \secondright{0.09}
& \bestleft{0.27} & \bestright{0.06} \\
\bottomrule
\end{tabular}}
\begin{minipage}{0.98\linewidth}
\footnotesize
\textsuperscript{*} Succeeded Only reports error metrics over the six successful trials.
Time offset $\delta t_d$ in (\textit{ms}), rotation $\delta\boldsymbol{\theta}$ in (\textit{deg}), and translation $\delta\mathbf p$ in (\textit{cm}).
\end{minipage}
\vspace{-5mm}
\end{table}

\vspace{-1.1mm}
\subsection{Spatiotemporal Calibration Performance}
\label{subsec:calibration_performance}

Calibration results presented in the following tables are for the camera IMU relative to the trunk IMU, with translations expressed in the trunk-IMU frame.

\vspace{-0.4mm}
\subsubsection{Ablation Study on NIT}

The executed condition numbers in Tables \ref{tab:contact_vimu_validation} and
\ref{tab:sims_ablation} differ from the theoretical
$\kappa_{\mathbf R}/\kappa_{\mathbf t}=1.000/1.872$.
Nevertheless, NIT improves conditioning over w/o NIT in both simulation
and hardware. Table \ref{tab:sims_ablation} shows the simulation pair
decreasing from $280.64/9.862$ to $1.46/1.640$, together with lower
time-offset, rotation, and translation RMSEs. Across three hardware
windows, Table \ref{tab:real_run24_camera_repeatability} shows NIT's
smallest within-motion standard deviations for all  seven Search-and-Solve
parameters. These results support
improved simulation accuracy and hardware repeatability despite the
execution gap.

\subsubsection{Ablation Study on AMP}

With NIT fixed, Table \ref{tab:contact_vimu_validation} shows less slip
and greater contact duty with AMP despite larger tracking error.
AMP learns the reference state-transition distribution, retaining
anti-noise excitation without pointwise tracking.
Table \ref{tab:sims_ablation} accordingly shows lower spatiotemporal
calibration errors. For w/o NIT versus Sinusoid, AMP does not consistently
reduce errors, because AMP reduces contact disturbances but poor conditioning
can dominate calibration errors and mask this benefit.

\subsubsection{Search-and-Solve vs. iKalibr}

Table \ref{tab:real_run24_camera_repeatability} shows comparable hardware
repeatability between the two solvers. iKalibr starts from identity rotation and zero
translation/time offset, with shared accelerometer/gyroscope weights of
$17.68/57.66$, a gravity norm of $9.81~\mathrm{m/s^2}$, and $0.5$-s
time-offset padding.
Table \ref{tab:sim_sota_transposed} reports simulation success rates of
$9/9$ for Search-and-Solve and $6/9$ for iKalibr. An iKalibr trial fails if its
$\mathrm{SO}(3)$ rotation error exceeds $5^\circ$ or translation error
exceeds $5~\mathrm{cm}$. In our experiments, the failed trials lie far beyond these limits.
To diagnose these failures, we test multiple initializations:
near-ground-truth starts recover all failed trials, whereas broad random
starts seldom do, indicating initialization sensitivity.
Diagnostics trace failures to gyro-based rotation--time coupling:
rotation can partly compensate an incorrect time offset and trap
refinement in a local optimum. Search-and-Solve
avoids coupled rotation--time initialization through time-offset
correlation search followed by rotational SVD and translational least
squares.

\begin{table}[!t]
\caption{Cross-Motion Consistency of the Estimated Parameters in Real-World Experiments}
\vspace{-2mm}
\label{tab:cross_motion_consistency}
\centering\footnotesize
\renewcommand{\arraystretch}{0.9}
\setlength{\tabcolsep}{3pt}
\begin{tabular}{@{}c|cccc@{}}
\toprule
\textbf{Solver} & \textbf{Motion} & $d_t$ (ms) & $d_R$ (deg) & $d_p$ (cm) \\
\midrule
\multirow{2}{*}{A$^2$I-Calib++} & w/o NIT & \bestcell{0.16} & \bestcell{0.30} & 5.18 \\
 & Walking & \secondcell{2.22} & 4.80 & 17.15 \\
\cmidrule(lr){1-1}
\multirow{2}{*}{iKalibr} & w/o NIT & 4.08 & \secondcell{0.68} & \bestcell{1.14} \\
 & Walking & 6.33 & 3.17 & \secondcell{4.98} \\
\bottomrule
\end{tabular}
\par\vspace{1pt}
\begin{minipage}{0.98\linewidth}
\footnotesize Relative to NIT+AMP means: $d_t$, absolute time-offset difference;
$d_R$, angle between chordal mean rotations;
$d_p$, Euclidean translation distance.
\end{minipage}
\vspace{-5mm}
\end{table}

\subsubsection{Cross-Motion Consistency and Interpretation}
\label{subsubsec:cross_motion_consistency}

Table \ref{tab:cross_motion_consistency} compares motions against each
solver's NIT+AMP mean: Search-and-Solve yields smaller temporal, comparable
rotational, and larger translational shifts than iKalibr.
Walking data suggest contact/reference mismatch:
Table \ref{tab:contact_vimu_validation} reports slip/linear-reference
proxies of $0.302/0.496~\mathrm{m/s}$ versus $0.038/0.043~\mathrm{m/s}$ for NIT+AMP.
w/o NIT data suggest conditioning-related amplification:
Table \ref{tab:contact_vimu_validation} shows smaller reference proxies but
rotational/translational condition numbers of $83.82/16.08$ versus $7.40/3.36$.
In summary, rolling-contact mismatch injects linear-velocity errors, VIMU modeling/fusion
affects reference consistency, conditioning governs amplification, and
estimation can propagate errors or introduce bias.
As consistency alone cannot establish accuracy,
we adopt downstream odometry to assess calibration performance.

\subsection{Real-World Downstream Odometry Validation}
\label{subsec:odometry_validation}

\begin{figure}[!t]
\centering
\begin{minipage}[t]{0.48\linewidth}
\vspace{0pt}
\centering
\captionof{table}{Proprioceptive Odometry Performance in Real-World Experiments}
\vspace{-2mm}
\label{tab:real_best4_odometry}
\renewcommand{\arraystretch}{1.15}
\setlength{\tabcolsep}{3.2pt}
\resizebox{\linewidth}{!}{\begin{tabular}{cccc}
\toprule
\multicolumn{2}{c|}{\textbf{Settings}}
& \multicolumn{2}{c}{\textbf{Accuracy}} \\
\cmidrule(r){1-2}\cmidrule(l){3-4}
\textbf{Motion}
& \multicolumn{1}{c|}{\makebox[3.333em]{\textbf{Solver}}}
& \textbf{APE}
& \textbf{RRE} \\
\midrule
\multirow{2}{*}{A2I++}
& A2I++ & \bestcell{0.366} & \bestcell{3.277} \\
& iKalibr & 0.370 & 4.347 \\
\cmidrule(lr){1-2}
\multirow{2}{*}{w/o NIT}
& A2I++ & 0.375 & 4.327 \\
& iKalibr & 0.370 & \secondcell{4.323} \\
\cmidrule(lr){1-2}
\multirow{2}{*}{Walking}
& A2I++ & 0.513 & 5.427 \\
& iKalibr & \secondcell{0.370} & 4.338 \\
\bottomrule
\end{tabular}
}
\vspace{-1.mm}

\begin{minipage}{\linewidth}
\footnotesize \textit{Units:} APE in $\mathrm{m}$; RRE in $^\circ$.
\end{minipage}
\end{minipage}
\hfill
\begin{minipage}[t]{0.48\linewidth}
\vspace{0pt}
\centering
\includegraphics[width=\linewidth]{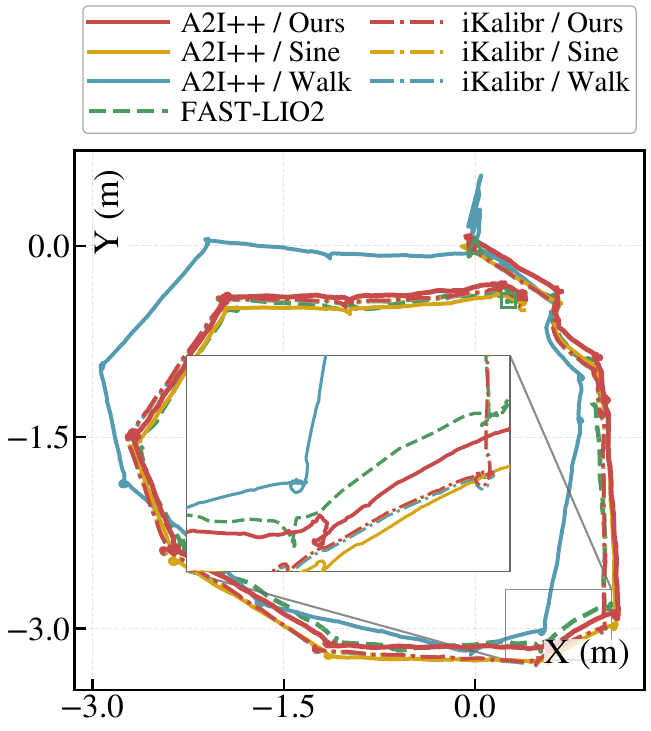}
\vspace{-7mm}
\captionof{figure}{Proprioceptive odometry trajectories compared with the FAST-LIO2 reference trajectory.}
\label{fig:sipo_six}
\end{minipage}
\vspace{-5.5mm}
\end{figure}

Against the FAST-LIO2 reference \cite{xu2022fastlio}, Table
\ref{tab:real_best4_odometry} shows that NIT+AMP with Search-and-Solve
achieves the lowest APE and RRE among the tested combinations at
$0.366~\mathrm{m}$ and $3.277^\circ$, respectively. Replacing either the motion or the
solver increases both errors, demonstrating their joint downstream
benefit. Figure \ref{fig:sipo_six} shows the corresponding trajectories.

\vspace{-2mm}
\section{Conclusion}
\vspace{-1mm}

This paper presents A$^2$I-Calib++, a kinematics-driven active calibration
framework for multi-IMU systems covered by a legged robot's kinematic model.
After the prescribed calf-IMU preliminary calibration and data collection, the
framework performs largely automated estimation of temporal offsets,
extrinsic rotations and translations, and IMU biases through an efficient
Search-and-Solve procedure. NIT provides an anti-noise reference
excitation, AMP realizes it as contact-aware robot motion, and the
contact-constrained VIMU fuses multi-leg measurements into the calibration
reference. Further evaluations show lower calibration errors in simulation, repeatable real-world calibration, and reduced downstream odometry errors. Future work will improve
adaptability to varying friction conditions and ground properties, mitigate sim-to-real gaps, and extend
the framework beyond multi-IMU calibration to exteroceptive modalities.

\vspace{-3mm}
\bibliographystyle{IEEEtran}
\bibliography{ref}

\end{document}